\title{An ASP-Based Framework for MUSes}
\author{Mohimenul Kabir
\institute{National University of Singapore}
\and
Kuldeep S Meel
\institute{Georgia Institute of Technology}
}
\definecolor{codegreen}{rgb}{0,0.6,0}
\definecolor{codegray}{rgb}{0.5,0.5,0.5}
\definecolor{codepurple}{rgb}{0.58,0,0.82}
\definecolor{backcolour}{rgb}{0.95,0.95,0.92}
\newtheorem{proposition}{Proposition}
\newtheorem{example}{Example}
\newcommand{\unsat}{\ensuremath{\mathsf{unsat}}}
\newcommand{\pos}[1]{\ensuremath{\mathsf{p}_{#1}}}
\newcommand{\ngt}[1]{\ensuremath{\mathsf{n}_{#1}}}
\newcommand{\cls}[1]{\ensuremath{\lambda_{#1}}}
\newcommand{\program}[1]{\mathcal{P}_{#1}}
\newcommand{\body}[1]{\mathsf{Body}(#1)}
\newcommand{\true}{\ensuremath{\mathsf{true}}\xspace}
\newcommand{\false}{\ensuremath{\mathsf{false}}\xspace}
\newcommand{\Card}[1]{|#1|}
\newcommand{\Var}[1]{\mathsf{Var}(#1)}
\newcommand{\clarknot}{\ensuremath{\mathsf{not}}\text{ }}
\newcommand{\answer}[1]{\mathsf{AS}(#1)}
\newcommand{\toolname}{\ensuremath{\mathsf{MUS}}-\ensuremath{\mathsf{ASP}}}
\newcommand{\marco}{MARCO\xspace}
\newcommand{\tome}{TOME\xspace}
\newcommand{\remus}{ReMUS\xspace}
\newcommand{\unimus}{UNIMUS\xspace}
\newcommand{\countmust}{CountMUST\xspace}
\newcommand{\amusic}{AMUSIC\xspace}
\newcommand{\be}{\mathbb{B}}
\newcommand{\heuristic}[1]{$\mathsf{\mathbf{H}}_{#1}$}
\newcommand{\rules}[1]{#1}
\newcommand{\at}[1]{\mathsf{atoms}(#1)}
\newcommand{\head}[1]{\mathsf{Head}(#1)}
\newcommand{\lb}[1]{\mathsf{lb}(#1)}
\newcommand{\qesshort}[1]{TQP\xspace}
\renewcommand{\lstlistingname}{Encoding}
\newcommand{\listingname}{Encoding}
\newcommand{\collect}{\ensuremath{\mathfrak{C}}}
\newcommand{\allclause}{\Lambda}
\newcommand{\ncls}{\ensuremath{\mathsf{ncl}}}
\begin{document}
\maketitle
\begin{abstract}
  Given an unsatisfiable formula, understanding the core reason for unsatisfiability is crucial in several applications. 
  One effective way to capture this is through the minimal unsatisfiable subset (MUS), the subset-minimal set of clauses that remains unsatisfiable.
  Current research broadly focuses on two directions: (i) enumerating as many MUSes as possible within a given time limit, and (ii) counting the total number of MUSes for a given unsatisfiable formula.
  
  In this paper, we introduce an answer set programming-based framework, named \toolname, designed for {\em online} enumeration of MUSes. 
  ASP is a powerful tool for its strengths in knowledge representation and is particularly suitable for specifying complex combinatorial problems.
  By translating MUS enumeration into answer set solving, \toolname~leverages the computational efficiency of state-of-the-art ASP systems.
  Our extensive experimental evaluation demonstrates the effectiveness of \toolname~and highlights the acceleration in both MUS enumeration and counting tasks, particularly when integrated within hybrid solvers, including the framework proposed in this paper.
  
  \noindent The extended version of this paper is available at: \url{https://arxiv.org/abs/2507.03929}. 
\end{abstract}

\section{Introduction}
When dealing with an unsatisfiable formula --- one that has no solution --- it is often crucial to identify the reasons for its unsatisfiability.
A common concept in the literature is the {\em minimal unsatisfiable subset} (MUS), which provides a concise explanation for the formula's unsatisfiability~\cite{LPMM2016}.
For instance, in {\em requirement analysis}~\cite{Bendik2017}, a set of constraints represents the requirements that need to be fulfilled.
If these constraints are unsatisfiable, a MUS highlights the minimal set of requirements that cannot be satisfied simultaneously.

Given an unsatisfiable Boolean formula $F$, expressed as a conjunction of clauses $\{C_1, \ldots, C_{\ncls}\}$,
a minimal unsatisfiable subset (MUS) of $F$ is a subset $F\textprime \subseteq F$ such that $F\textprime$ is unsatisfiable
and for every $C_i \in F\textprime$, the set $F\textprime \setminus \{C_i\}$ is satisfiable.
Finding MUSes is an active research area due to their wide range of applications, including  
formal equivalence checking~\cite{CGLNR2010}, 
Boolean function bi-decomposition~\cite{CM2012},
counter-example guided abstraction refinement~\cite{ALS2007}, circuit error diagnosis~\cite{HL1999},
and others~\cite{HK2008,IMMV2016,JS2016}.

Since the MUS of an unsatisfiable formula is not necessarily unique, it is important to enumerate multiple MUSes. 
Enumerating more MUSes of $F$ provides a clearer explanation of the underlying unsatisfiability~\cite{BC2020}.
Thus, existing research focuses on enumerating MUSes in an {\em online fashion}~\cite{BK2015,BC2020,BCB2018,BK2016,LM2013,NBMS2018}, aiming to identify as many MUSes as possible within a given time limit. 
Moreover, the MUS count is frequently used as a metric to quantify inconsistency in diagnostic tasks~\cite{HK2008,MLJ2011}.    

Existing MUS enumeration algorithms can be broadly divided into two categories: algorithms based on the well-known {\em seed-shrink} scheme~\cite{BC2020b} and those employing the {\em hitting set duality}~\cite{Reiter1987}.  
The seed-shrink scheme systematically explores subsets of $F$ and identifies an unsatisfiable subset called a {\em u-seed}, which is then shrunk into a MUS through a single MUS {\em extractor}~\cite{BM2012}. 
Conversely, the hitting set duality leverages the well-known relationship between MUSes and MCSes, where a {\em minimal correction subset} (MCS) is the minimal subset $F\textprime \subset F$ such that the formula $F \setminus F\textprime$ is satisfiable. 
This approach involves computing and exploiting the MCSes of $F$ to enumerate MUSes~\cite{LS2008}.
Traditional MUS enumeration techniques often rely on SAT-based methods. 
In contrast, we explore an alternative perspective by formulating MUS enumeration as an {\em answer set solving} problem, leveraging the expressiveness of ASP for modeling combinatorial structures.

Answer set programming (ASP)~\cite{MT1999} is a well-known declarative problem-solving paradigm in knowledge representation and reasoning, valued for its ability to model complex combinatorial problems~\cite{EGL2016}. 
An ASP {\em program} consists of \textit{logical rules} defined over {\em propositional atoms}, encoding domain knowledge and queries. 
An assignment of values to propositional atoms that satisfies the ASP semantics of the program is referred to as an \emph{answer set}.
ASP programs can be classified into two categories based on their expressiveness: {\em disjunctive} and {\em normal} ASP programs~\cite{Lierler2005}. 
However, the expressiveness comes at the cost of computational complexity. 
Specifically, determining whether a disjunctive program has an answer set is $\textstyle \sum_{2}^{P}$-complete~\cite{EG1995}, 
while the complexity reduces to NP-complete for normal programs~\cite{MT1991}.

The main contribution of the paper is the proposal of an ASP solving-based MUS enumeration framework called \toolname.
At a high level, given an unsatisfiable Boolean formula $F$, we formulate a disjunctive ASP program $\program{F}$ such that the answer sets of $\program{F}$ correspond to the unsatisfiable cores of $F$.
To compute MUSes, we focus on {\em subset minimal} answer sets with respect to a {\em target} set of atoms.
Computing the subset minimal answer sets is not a new research problem~\cite{ADFPR2023,BDRS2015,GKROS2013}; it has been shown to be useful in addressing various real-world computational problems~\cite{ADFR2021,DGRRS2019,OPSSG2016}.
We are not the first to leverage ASP for MUS computation; it has previously been used to compute MUSes of LTL$_f$~\cite{IMRP2025}. 
Particularly, we harness the efficiency of subset minimal answer set solvers to compute MUSes. 
Moreover, we incorporate several heuristics into our encoding to enhance practical scalability.
Although the worst-case complexity of disjunctive ASP is high ($\textstyle \sum_{2}^{P}$-complete), practical instances often admit a {\em small backdoor} to {\em normality}~\cite{FS2015}, meaning that they 
can be transformed into normal programs by few atoms.
From the computational complexity perspective, finding a MUS is in $\text{FP}^{\text{NP}}$~\cite{CT1995,MJ2014}. 
It is worth noting that reducing one problem to a harder one is not uncommon in combinatorial problem-solving~\cite{LSK2023}. 
For example, the MUS counting framework AMUSIC~\cite{BM2020} invoked a $\textstyle \sum_{3}^{P}$ oracle to find MUSes.

Building on prior work in ASP-based counting~\cite{KM2023,KTPM2025}, we provide an extensive experimental evaluation demonstrating that \toolname~outperforms existing MUS enumerators and counters across multiple benchmarks, where counting is performed via enumeration. 
Although \toolname~does not consistently dominate across every benchmark, our proposed {\em hybrid} approach, which integrates \toolname~with existing enumerators, achieves significant performance gains across all benchmarks. 
While the {\em complete} MUS enumeration is theoretically intractable~\cite{Sperner1928}, \toolname~significantly outperforms state-of-the-art MUS counters in practice.

The paper is organized as follows: 
\Cref{section:preliminaries} presents the necessary background knowledge,
\Cref{section:relatedwork} reviews the existing literature on MUS enumeration and counting, 
\Cref{section:methodology} presents our proposed framework \toolname, designed for MUS enumeration,
\Cref{section:experiment} presents the experimental results,
finally, we conclude the paper in~\Cref{section:conclusion} with some future research directions. 
\section{Preliminaries}
\label{section:preliminaries}
In this section, we present the background knowledge from propositional logic, hitting set duality, and answer set programming to understand the technical contribution. 

\paragraph{Propositional Logic} A \emph{propositional variable} $v$ takes a value from the domain $\be = \{0,1\}$, equivalently $\{\mathsf{false}, \mathsf{true}\}$. 
A \emph{literal} $\ell$ is either a variable $v$ (positive literal) or its negation $\neg{v}$ (negative literal).
A \emph{clause} $C$ is a {\em disjunction} of literals, denoted as $C = \bigvee_{i} \ell_i$. 
A Boolean formula $F$, in \emph{Conjunctive Normal Form (CNF)}, is a {\em conjunction}
of clauses, represented as $F = \bigwedge_{j} C_j$. We use the notation $\Var{F}$ to denote the set of propositional atoms or variables within  
$F$. 
An assignment $\tau$ is a mapping $\tau: X \rightarrow \be$, where $X \subseteq \Var{F}$.  For a variable $v \in X$, we define
$\tau(\neg{v}) = 1 - \tau(v)$. 
An assignment $\tau$ satisfies a literal $\ell$, denoted as $\tau \models \ell$, if $\tau$ evaluates $\ell$ to \true.
Similarly, an assignment $\tau$ satisfies a clause $C$, denoted as $\tau \models C$, if $\tau$ evaluates one of its literals to \true.
The assignment $\tau$ over $\Var{F}$ is a {\em model} of $F$ if $\tau$ evaluates $F$ to be \true. 
A formula $F$ is said to be {\em satisfiable} if there exists a model $\tau$ of $F$.
Otherwise, the formula $F$ is said to be {\em unsatisfiable}.

We consider a model of $F$ as a set of positive literals it assigns and define the {\em minimal model} of a propositional formula $F$~\cite{ABFL2017}.
We also consider an {\em ordering operator} over two models~\cite{JGM2010}.
For two given models $\tau_1$ and $\tau_2$, $\tau_1$ is considered {\em smaller} than $\tau_2$, denoted as $\tau_1 \leq \tau_2$, if and only if for each $x \in \Var{F}$, $\tau_1(x) \leq \tau_2(x)$; 
and $\tau_1$ is {\em strictly smaller} than $\tau_2$, denoted as $\tau_1 < \tau_2$, if $\tau_1 \leq \tau_2$ and $\tau_1 \neq \tau_2$.
A model $\tau$ is a {\em minimal model} of $F$ if and only if $\tau$ is a model of $F$ and no model of $F$ is strictly smaller than $\tau$.

It is well known that in every minimal model $\tau$, each variable assigned to \true is {\em justified}~\cite{KM2024}. More formally,  
given that $\tau$ is a minimal model of $F$, a (positive) literal $\ell \in \tau$ is said to be {\em justified} in $\tau$, if there exists a clause $C \in F$ such that $\ell \in C$ and $\tau \setminus \{\ell\} \not \models C$. 
That is, every literal $\ell \in \tau$ must be justified; otherwise, if no such $C$ exists for a particular $\ell$, then $\tau \setminus \{\ell\}$ is also a model of $F$ and $\tau \setminus \{\ell\} < \tau$, contradicting that $\tau$ is minimal. 
Intuitively, justification explains {\em why} each literal $\ell \in \tau$ must be in the model; removing it would violate the satisfaction of some clauses in $F$.

We consider a CNF formula as a multiset of clauses and use the operations introduced in the context of sets.
Within the context of unsatisfiable Boolean formula, we introduce the following terminologies. 
An unsatisfiable core of $F$ is a set of clauses $U \subseteq F$ such that $U$ is unsatisfiable. 
A {\em minimal correction subset} (MCS) of $F$ is a subset of clauses $F\textprime \subset F$ such that $F \setminus F\textprime$ is satisfiable and for every clause $C_i \in F\textprime$, it holds that $F \setminus (F\textprime \setminus \{C_i\})$ is unsatisfiable~\cite{LPMM2016}.
A {\em minimal unsatisfiable subset} (MUS) of $F$ is a subset of clauses $F\textprime \subseteq F$ such that $F\textprime$ is unsatisfiable and for every $C_i \in F\textprime$, it holds that $F\textprime \setminus \{C_i\}$ is satisfiable.

\begin{example}
    \label{example:example1}
    Consider the Boolean formula $F = \{C_1 = \{x_1\}, C_2 = \{\neg{x_1}\}, C_3 = \{x_2\}, C_4 = \{\neg{x_1}, \neg{x_2}\}\}$.
    The formula $F$ is unsatisfiable. 
    The formula $F$ has $5$ unsatisfiable cores: $\{C_1,C_2\}$, $\{C_1,C_2,C_3\}$, $\{C_1,C_2,C_4\}$, $\{C_1,C_3,C_4\}$, $\{C_1,C_2,C_3,C_4\}$. 
    The formula $F$ has $3$ MCSes: $\{C_1\}$, $\{C_2,C_3\}$, $\{C_2,C_4\}$.
    The formula $F$ has $2$ MUSes: $\{C_1,C_2\}$ and $\{C_1,C_3,C_4\}$.
\end{example}

\paragraph{Minimal Hitting Set Duality}
Given a collection of sets $\collect$, a hitting set $H$ is a set such that for every $\collect\textprime \in \collect$, $\collect\textprime \cap H \neq \emptyset$.
A hitting set is called minimal if none of its proper subsets is a hitting set. 
For a given Boolean formula $F$, the {\em minimal hitting set duality}~\cite{LMPS2005,Reiter1987} says that $H$ is a MUS of $F$ if and only if $H$ is a minimal hitting set of all MCSes of $F$.

\paragraph{Answer Set Programming}
An \textit{answer set program} $P$ consists of a set of rules, each rule $r$ is structured as either of the two following forms:
\begin{align}
\label{eq:generalrule}
\text{$r$:~}a_1 \vee \ldots \vee a_k \leftarrow b_1, \ldots, b_m, \textsf{not } c_1, \ldots, \textsf{not } c_n 
\end{align}
\begin{align}
    \label{eq:choicerule}
    \text{$r$:~}\{a_1 ; \ldots ; a_k\} \geq \lb{r}
    \leftarrow b_1, \ldots, b_m, \textsf{not } c_1, \ldots, \textsf{not } c_n
\end{align}
where, $a_1, \ldots, a_k, b_1, \ldots, b_m, c_1, \ldots, c_n$ are propositional variables or atoms; $k,m,n,\lb{r}$ are non-negative integers. 
The notations $\rules{P}$ and $\at{P}$ denote the rules and atoms within the program $P$, respectively. 
In rule $r$, the operator ``\textsf{not}'' denotes \textit{default negation}~\cite{clark1978}. For each 
rule $r$ (shown in \Cref{eq:generalrule,eq:choicerule}), we adopt the following notations: the atom set $\{a_1, \ldots, a_k\}$ constitutes the {\em head} of $r$, denoted by $\head{r}$, the set $\{b_1, \ldots, b_m\}$ is referred to as the {\em positive body atoms} of $r$, denoted by $\body{r}^+$, and the set $\{c_1, \ldots, c_n\}$ is referred to as the \textit{negative body atoms} of $r$, denoted by $\body{r}^-$.
A rule $r$ is called a {\em constraint} when $\head{r}$ has no atom. The {\em empty head} and {\em empty body} are often denoted by the notations $\bot$ (\false) and $\top$ (\true), respectively.
A rule $r$ is called a {\em cardinality} rule when the rule $r$ is of form~\Cref{eq:choicerule}.
A program $P$ is called a {\em disjunctive logic program} if there is a rule $r\in \rules{P}$ such that $r$ is of form~\cref{eq:generalrule} and $\Card{\head{r}} \geq 2$~\cite{BD1994}.
Otherwise, the program $P$ is called a normal logic program.
In this paper, we use the ASP-Core-$2$ standard input language for ASP programs~\cite{CFGIKK2020}.

In ASP, an interpretation $\tau$ over $\at{P}$ specifies which atoms are assigned \true; an atom $a$ is \true under $\tau$ if and only if $a \in \tau$ (or \false when $a \not\in \tau$ resp.). 
An interpretation $\tau$ satisfies a rule $r$, denoted by $\tau \models r$, if and only if: (i) $(\head{r} \cup \body{r}^{-}) \cap \tau \neq \emptyset$ or $\body{r}^{+} \setminus \tau \neq \emptyset$ for a rule of the form~\cref{eq:generalrule}; and (ii) $\Card{\head{r} \cap \tau} \geq \lb{r}$ or $\tau \cap \body{r}^{-} \neq \emptyset$ or $\body{r}^{+} \setminus \tau \neq \emptyset$ for a rule of the form~\cref{eq:choicerule}. 
An interpretation $\tau$ is a {\em model} of $P$, denoted by $\tau \models P$, if $\tau \models r$ for each $r \in \rules{P}$. 
The \textit{reduct} of a rule $r \in \rules{P}$, with respect to an interpretation $\tau$, denoted $r^{\tau}$, is defined as (i) the singleton $\{\head{r} \leftarrow \body{r}^+ \mid \body{r}^- \cap \tau = \emptyset\}$ if $r$ is of the form~\cref{eq:generalrule}
and (ii) the set $\{a \leftarrow \body{r}^+ \mid a \in \head{r} \cap \tau, \body{r}^- \cap \tau = \emptyset\}$ if $r$ is of the form~\cref{eq:choicerule}.
The \textit{Gelfond-Lifschitz (GL) reduct} of a program $P$ with respect to an interpretation $\tau$, is defined as $P^{\tau} = \{r^{\tau} \mid r \in \rules{P}\}$~\cite{GL1991}.
The interpretation $M$ is an {\em answer set} of $P$ if $M \models P$ and no $M\textprime \subset M$ exists such that $M\textprime \models P^{M}$.
We denote the answer sets of program $P$ using the notation $\answer{P}$.
We define the {\em subset minimal answer set} of a program $P$ with respect to a set of atoms $O \subseteq \at{P}$.
An answer set $M \in \answer{P}$ is a subset minimal answer set of $P$ if there is no other answer set $M\textprime \in \answer{P}$ such that $M\textprime \cap O \subset M \cap O$~\cite{ADFPR2023}.
\section{Related Works}
\label{section:relatedwork}
Previous research on MUSes primarily focused on the extraction of a single MUS or computing one single MUS~\cite{BM2011a,BM2012,Nadel2010,NRS2014}.
However, with the increasing demand for computing multiple MUSes, the area of MUS enumeration has emerged as a significant focus of recent research~\cite{BC2020}. 
Very recently, there has been growing interest in MUS counting.

\paragraph{MUS Enumeration.}
FLINT~\cite{NBMS2018} utilizes a framework similar to those in~\cite{MP2008} for MUS enumeration, 
iteratively computing MUSes and MCSes of an unsatisfiable formula across multiple {\em rounds}, alternatively {\em relaxing} an unsatisfiable formula and {\em strengthening} a satisfiable one.

\marco~\cite{LPMM2016}, TOME~\cite{BBCB2016}, and \remus~\cite{BCB2018} are algorithms based on {\em seed-shrink} techniques.
To find a MUS of a Boolean formula $F$, these algorithms gradually explore subsets of $F$ to identify an unexplored subset that is unsatisfiable, called a {\em u-seed}.
This u-seed is then shrunk to a MUS through a single MUS extraction subroutine.
The main difference between these algorithms lies in how they identify the u-seed $S$.
\marco~iteratively identifies maximal unexplored subsets of $F$ and checks their satisfiability until a u-seed is found.
\tome~computes a chain of unexplored subsets of $F$ --- namely, $F_1 \subset \ldots F_p$, and then performs a {\em binary search} over the chain to identify MUSes.
\remus~employs a different technique to identify smaller u-seeds, which are easier to shrink.
In particular, \remus~finds the initial u-seed $S$ among the maximal unexplored subsets of $F$ and shrinks it into a MUS $S\textprime$.
To find subsequent MUSes, \remus~selects a set $R$ such that $S\textprime \subset R \subset S$ and recursively searches for u-seeds within the maximal unexplored subsets of $R$, 
decreasing the size of the u-seed with each recursive call.

The hitting set duality is specifically used in MUS enumerators such as CAMUS~\cite{LS2008} and DAA~\cite{BS2005}.
CAMUS operates in two phases: in the first phase, it computes MCSes of a formula using an {\em incremental} SAT solver; in the second phase, it enumerates MUSes by finding minimal hitting sets of the computed MCSes through a {\em recursive branching algorithm}.
DAA is based on the {\em Dualize and Advance} technique~\cite{GKMS2003} and, unlike CAMUS, incrementally computes both MCSes and MUSes without enumerating all MCSes.
A major drawback of these techniques is that the number of MCSes can grow exponentially.

The hitting set duality is also leveraged in the MUS enumerator MCS-MUS~\cite{BK2015}, which targets {\em constrained MCSes} instead of arbitrary ones. 
This approach incorporates several optimization techniques, such as {\em recursive model rotation}~\cite{BM2011a} and {\em clause set refinement}~\cite{NRS2014}.
In a follow-up work, Bacchus and Katsirelos~\cite{BK2016} extended MCS-MUS with a version called MCS-MUS-BT, which performs a backtracking search over a tree, exploiting information gathered from previous MUS enumerations to accelerate the finding of new MUSes. 

\paragraph{MUS Counting.}
MUS counting without explicit enumeration was first introduced by~\cite{BM2020}. They developed a {\em hashing-based} model counting framework, \amusic, which provides an $(\varepsilon, \delta)$-approximation~\cite{CMV2013}. 
The counting framework \amusic does not scale due to its reliance on $\sum_{3}^{P}$ oracles.
Later, Bendik and Meel~\cite{BM2021} proposed \countmust, a framework for exact MUS counting that leverages {\em subtractive reduction}~\cite{DHK2005} and {\em projected model counting}. 

\paragraph{Subset-minimal Answer Sets.}
A variety of techniques exist for computing subset-minimal answer sets.
The tool asprin~\cite{BDRS2015} supports subset minimality as a special case of user-defined {\em preferences}.
Clingo computes the subset-minimal answer sets using an approach similar to the OPTSAT algorithm~\cite{DGM2010}.
More recently, Alviano et al.~\cite{ADFPR2023} proposed a family of methods for computing subset-minimal answer sets within the ASP solver Wasp.

\section{Methodology}
\label{section:methodology}
In this section, we present the \toolname~framework for MUS computation. 
We first introduce an ASP program $\program{F}$ that captures the unsatisfiable cores of a given Boolean formula $F$, with each answer set corresponding to one such unsatisfiable core (\Cref{subsection:aspencoding}).
Then, we show the correctness of our methodology (\Cref{subsec:theoreticalcorrectness}).
Finally, we describe some heuristics that improve performance by reducing the search space while preserving completeness (\Cref{subsection:heuristic}).  
\subsection{ASP Encoding of \toolname}
\label{subsection:aspencoding}
Since MUSes correspond to subset-minimal unsatisfiable subsets, and ASP excels at expressing subset minimality constraints, we encode the MUS enumeration problem as an ASP program.
For a given Boolean formula $F$, the ASP program $\program{F}$ defines the following atoms:
\begin{itemize}
  \item $\cls{i}$: for each clause $C_i \in F$, we introduce a {\em clause selector atom} $\cls{i}$.
  Each interpretation $\tau$ over the atoms of $\program{F}$ {\em selects} a subset of clauses from $F$; a clause $C_i$ is selected by an interpretation $\tau$ if and only if $\cls{i} \in \tau$. 
  \item $\pos{x}$, $\ngt{x}$: for each variable $x \in \Var{F}$, we introduce two atoms: $\pos{x}$ and $\ngt{x}$. 
  
  These atoms follow the symbolic interpretations: $\pos{x}$ (resp. $\ngt{x}$) indicates that the variable $x$ is assigned \true (resp. \false). While 
  a variable $x$ cannot be assigned both values simultaneously, it is possible that for a given interpretation $\tau$, both $\pos{x}$ and $\ngt{x}$ belong to $\tau$.  
  \item $\unsat$: we introduce an atom $\unsat$ to indicate that at least one of the clauses selected by an interpretation is not satisfied according to propositional semantics.
\end{itemize}
\begin{figure*}
\begin{lstlisting}[caption={Program $\program{F}$.},label={code:astouc},captionpos=b,mathescape=true,escapechar=|,xleftmargin=0.5cm]
  $\pos{x_i} \vee \ngt{x_i} \leftarrow \top.$|\label{line:eachvariable}|
  $\unsat \leftarrow \cls{i}, \ngt{x_1}, \ldots \ngt{x_k}, \pos{x_{k+1}}, \ldots \pos{x_{k+m}}$.|\label{line:reach1}|
  $\bot \leftarrow$ not $\unsat$.|\label{line:unsat}|
  $\pos{x_i} \leftarrow \unsat$.|\label{line:posvalue}|
  $\ngt{x_i} \leftarrow \unsat$.|\label{line:negvalue}|
  $\{ \cls{1}; \ldots \cls{\ncls} \} \geq 0 \leftarrow \top.$|\label{line:allclauses}|
\end{lstlisting}
\renewcommand{\lstlistingname}{Algorithm}
\end{figure*}

We denote the set $\allclause = \{\cls{1}, \ldots, \cls{\ncls}\}$ and the {\em projection} of $\tau$ onto the atom set $\allclause$ as $\tau_{\downarrow \allclause}$. 
This projection extracts the selected clauses, allowing us to map each answer set to a corresponding unsatisfiable core.
Given $F = \bigwedge_{i=1}^{\ncls} C_i$, we introduce the ASP program $\program{F}$, where each unsatisfiable core of $F$ corresponds one-to-one with the answer sets of $\program{F}$. 
The ASP program $\program{F}$ is outlined in \listingname~\ref{code:astouc}. 

The core idea behind $\program{F}$ is to non-deterministically select a set of clauses from $F$, as specified by the rule in line~\ref{line:allclauses}.
Among the selected clauses, the program enforces that at least one clause is unsatisfied under any variable assignment, ensuring the subset forms an unsatisfiable core (enforced by the rule in line~\ref{line:unsat}). 
We formally prove that each answer set of $\program{F}$ corresponds bijectively to an unsatisfiable core of $F$ (see~\Cref{theorem:astoucproof}).
More specifically, for every answer set $M \in \answer{\program{F}}$, the projection $M_{\downarrow \allclause}$ identifies an unsatisfiable core of $F$.
For instance, if $M_{\downarrow \allclause} = \{\cls{1}, \ldots, \cls{k}\}$, then the corresponding clause set $\{C_1, \ldots, C_k\} \subseteq F$ is unsatisfiable.
Consequently, all subset minimal answer sets of $\program{F}$ with respect to the set $\allclause$ capture exactly the minimal unsatisfiable subsets (MUSes) of $F$.
\begin{restatable}{theorem}{astoucproof}
  \label{theorem:astoucproof}
  There is a bijective correspondence between the answer sets of $\program{F}$ and the unsatisfiable cores of $F$, under the projection onto $\allclause$.
\end{restatable}
\noindent The proof of~\Cref{theorem:astoucproof} is deferred to~\Cref{subsec:theoreticalcorrectness}. 

\begin{proposition}
The subset minimal answer sets of $\program{F}$ with respect to the atom set $\allclause$ correspond to the MUSes of $F$.
\end{proposition}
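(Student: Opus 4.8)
The plan is to leverage \Cref{theorem:as_to_uc_proof}, which already establishes a bijection $\Phi$ between $\answer{\program{F}}$ and the unsatisfiable cores of $F$, given by $\Phi(M) = \{C_i : \cls{i} \in M_{\downarrow \allclause}\}$. It suffices to show that this bijection carries subset-minimal answer sets (with respect to $\allclause$) exactly onto subset-minimal unsatisfiable cores, i.e.\ MUSes. So the argument is essentially an order-isomorphism check: the bijection is monotone in both directions, hence minimal elements map to minimal elements.

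First I would observe that, by the definition in the preliminaries, for answer sets $M, M\textprime \in \answer{\program{F}}$ we have $M \cap \allclause \subseteq M\textprime \cap \allclause$ if and only if $\Phi(M) \subseteq \Phi(M\textprime)$, since $\Phi$ simply renames $\cls{i}$ to $C_i$ and forgets the non-selector atoms. Thus $M$ is a subset-minimal answer set w.r.t.\ $\allclause$ exactly when there is no answer set $M\textprime$ with $\Phi(M\textprime) \subsetneq \Phi(M)$; by surjectivity of $\Phi$ onto the unsatisfiable cores, this is exactly the condition that no unsatisfiable core is a proper subset of $\Phi(M)$ — that is, $\Phi(M)$ is a minimal unsatisfiable subset.

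Then I would close the loop by recalling why ``$\Phi(M)$ is a subset-minimal unsatisfiable core'' coincides with the MUS definition given in \Cref{section:preliminaries}: an unsatisfiable $F\textprime$ is a MUS iff $F\textprime \setminus \{C_i\}$ is satisfiable for every $C_i \in F\textprime$, which is equivalent to saying no proper subset of $F\textprime$ is unsatisfiable (a proper subset would be contained in some $F\textprime \setminus \{C_i\}$, and conversely each $F\textprime \setminus \{C_i\}$ is itself a proper subset). Hence $\Phi$ restricts to a bijection between the subset-minimal answer sets of $\program{F}$ w.r.t.\ $\allclause$ and the MUSes of $F$.

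I do not anticipate a genuine obstacle here — the statement is a corollary of \Cref{theorem:as_to_uc_proof} plus the elementary fact that a bijection monotone in both directions preserves minimality. The one point worth stating carefully is that the comparison of answer sets in the definition of subset-minimality is taken \emph{over all answer sets of $\program{F}$}, so that the ``witness'' $M\textprime$ to non-minimality always corresponds, via $\Phi^{-1}$, to an actual unsatisfiable core; this is precisely where surjectivity of $\Phi$ is used, and it is the only place the full strength of \Cref{theorem:as_to_uc_proof} (rather than mere injectivity) matters.
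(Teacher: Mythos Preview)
Your proposal is correct and matches the paper's intent: the paper states this proposition immediately after \Cref{theorem:as_to_uc_proof} without proof, treating it as an evident corollary, and your argument---that the bijection of \Cref{theorem:as_to_uc_proof} is order-preserving under projection to $\allclause$, hence carries minimal elements to minimal elements---is precisely the natural way to spell out that corollary. There is nothing to add.
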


\addtocounter{example}{-1}
\begin{example}[continued]
  Given the Boolean formula $F = \{C_1 = \{x_1\}, C_2 = \{\neg{x_1}\}, C_3 = \{x_2\}, C_4 = \{\neg{x_1}, \neg{x_2}\}\}$,
  the program $\program{F}$ consists of the following rules:
  \begin{lstlisting}[label={code:astouc},captionpos=b,mathescape=true,escapechar=|,numbers=none]
    $\pos{x_1} \vee \ngt{x_1} \leftarrow \top. \qquad \pos{x_2} \vee \ngt{x_2} \leftarrow \top.$   
    $\unsat \leftarrow \cls{1}, \ngt{x_1}. \qquad \unsat \leftarrow \cls{2}, \pos{x_1}. \qquad \unsat \leftarrow \cls{3}, \ngt{x_2}. \qquad \unsat \leftarrow \cls{4}, \pos{x_1}, \pos{x_2}.$
    $\bot \leftarrow$ not $\unsat$.
    $\pos{x_1} \leftarrow \unsat. \qquad \ngt{x_1} \leftarrow \unsat. \qquad \pos{x_2} \leftarrow \unsat. \qquad \ngt{x_2} \leftarrow \unsat.$
    $\{ \cls{1}; \cls{2}; \cls{3}; \cls{4} \} \geq 0 \leftarrow \top.$
  \end{lstlisting}
  The program $\program{F}$ has $5$ answer sets, each corresponding to an unsatisfiable core of $F$. 
  Among them, two are subset minimal answer sets w.r.t. the atom set $\allclause$ : $\{\pos{x_1}, \ngt{x_1},\pos{x_2}, \ngt{x_2}, \unsat, \cls{1}, \cls{2}\}$ and 
  \\$\{\pos{x_1}, \ngt{x_1},\pos{x_2}, \ngt{x_2}, \unsat, \cls{1}, \cls{3}, \cls{4}\}$. 
  These two subset minimal answer sets correspond to the MUSes $\{C_1, C_2\}$ and $\{C_1, C_3, C_4\}$, respectively.
\end{example}

\subsection{Theoretical Correctness}
\label{subsec:theoreticalcorrectness}
We outline the reduct of program $\program{F}$ with respect to an arbitrary interpretation $\tau \models \program{F}$ in \listingname~\ref{code:programtoreduct}. 
In both encodings, the lines starting with the \% sign indicate comments.
The reduct of $\program{F}$ w.r.t. $\tau$ is important for the theoretical correctness of our technique (proof of~\Cref{theorem:astoucproof}). 
Following the approach of SAT-based ASP solvers~\cite{Lierler2005}, the reduct replaces each rule $r \in \rules{\program{F}}$ by an {\em implication}: $\bigwedge_{b \in \body{r}^{+}} b \wedge \bigwedge_{c \in \body{r}^{-}} \neg{c} \longrightarrow \bigvee_{a \in \head{r}} a$.
These implications are written as disjunctions of literals in Encoding~\ref{code:programtoreduct}.
For each interpretation $\tau \models \program{F}$, we can show that $\{\unsat\} \subset \tau$ (\Cref{prop:helperproposition}).
As a result, following the definition of reduct, there is no implication corresponding to the rule in line~\ref{line:unsat} (Encoding~\ref{code:astouc}). 

\begin{proposition}
  \label{prop:helperproposition}
  For each $\tau \models \program{F}$: (i)~$\{\unsat\} \subset \tau$ and (ii)~$\forall x \in \Var{F}, \{\pos{x}, \ngt{x}\} \subset \tau$.
\end{proposition}
\Cref{prop:helperproposition} can be proved using {\em Clark completion} semantics~\cite{clark1978}. 
If (i) is not true, then $\tau$ does not satisfy the rule in line~\ref{line:unsat} of \listingname~\ref{code:astouc}.
If (ii) is not true, then $\tau$ does not satisfy rules in lines~\ref{line:posvalue} and~\ref{line:negvalue}.

\astoucproof*
\begin{proof}
    The proof is structured into two parts:
    \begin{enumerate}[leftmargin=*, itemsep=0pt, topsep=2pt]
      \item For each $M \in \answer{\program{F}}$, there exists a corresponding unsatisfiable core of $F$
      \item For each unsatisfiable core $U$ of $F$, there exists a corresponding answer set of $\program{F}$ 
    \end{enumerate}
    \textbf{Proof of ``Part $1$''}: Given $M \in \answer{\program{F}}$, without loss of generality, assume that $M_{\downarrow \allclause} = \{\cls{1}, \ldots, \cls{k}\}$.
    We prove that the clause set $\{C_1, \ldots, C_k\}$ is an unsatisfiable core of $F$.
    Each answer set follows Clark completion semantics and according to~\Cref{prop:helperproposition}, the answer set is $M = \{\cls{1}, \ldots, \cls{k}\} \cup \{\unsat\} \cup \{\pos{x}, \ngt{x}| \forall x \in \Var{F}\}$.
    Note that for each answer set $M$, the set $M_{\downarrow \allclause}$ maps to a unique set of clauses of $F$.
  
    For the purpose of contradiction, assume that the clause set $\{C_1, \ldots, C_k\}$ is satisfiable. 
    Let the clause set $\{C_1, \ldots, C_k\}$ have the variable set $\{x_1, \ldots, x_n\}$.
    Since the clause set $\{C_1, \ldots, C_k\}$ is satisfiable, there exists a model, which assigns variables $x_1, \ldots, x_{n^{\star}}$ to \true and $x_{n^{\star} + 1}, \ldots, x_n$ to \false, where $n^{\star} \in [1, n]$.
    Now consider an interpretation $\tau\textprime = \{\cls{1}, \ldots, \cls{k}, \pos{x_1}, \ldots, \pos{x_{n^{\star}}}, \ngt{x_{n^{\star} + 1}}, \ldots, \ngt{x_n}\}$, which is a proper subset of $M$.
    There is a model of the clause set $\{C_1, \ldots, C_k\}$, meaning that the model satisfies each clause and we can easily show $\tau\textprime$ satisfies $\program{F}^{M}$, where $\program{F}^{M}$ denotes the reduct of $\program{F}$ with respect to $M$.
    Since $\tau\textprime \subset M$, this contradicts that $M$ is an answer set of $\program{F}$.
  
    \textbf{Proof of ``Part $2$''}: Without loss of generality, let assume that the set of clauses $U = \{C_{1}, \ldots C_{k}\}$ is an unsatisfiable core of $F$.
    We prove that the unsatisfiable core $U$ uniquely corresponds to an answer set of $\program{F}$. More specifically, we prove that given the unsatisfiable core $U$, 
    the interpretation $\tau = \{\cls{1}, \ldots, \cls{k}\} \cup \{\unsat\} \cup \{\pos{x}, \ngt{x}| \forall x \in \Var{F}\}$ is an answer set of $\program{F}$.
    Note that the interpretation $\tau$ is unique for each unsatisfiable core $U$. 
    By definition, $\tau \models \program{F}$. We prove that no $\tau\textprime \subset \tau$ exists such that $\tau\textprime \models \program{F}^{\tau}$.
    
    For the sake of contradiction, assume that $\exists \tau\textprime \subset \tau$ such that $\tau\textprime$ is a minimal interpretation satisfying $\program{F}^{\tau}$.
    Recall from justification definition, as explained above, each atom in the minimal interpretation $\tau\textprime$ is justified. Otherwise, there is another interpretation $\tau\textprime\textprime \subset \tau\textprime$ that satisfies $\program{F}^{\tau}$.
    
    Note that $\{\cls{1}, \ldots, \cls{k}\} \subset \tau\textprime$ (due to the clause introduced in line~\ref{line:unitclause} of Encoding~\ref{code:programtoreduct}).
    Furthermore, $\unsat \not \in \tau\textprime$; if $\unsat \in \tau\textprime$ (or $\unsat$ is justified), then $\tau\textprime = \tau$ (due to the clauses in lines~\ref{line:posinreduct},\ref{line:neginreduct} of Encoding~\ref{code:programtoreduct}).
    For each variable $x \in \Var{F}$, it holds that $\{\pos{x}, \ngt{x}\} \cap \tau\textprime \neq \emptyset$ (due to the clause in line~\ref{line:eachvariable} of Encoding~\ref{code:programtoreduct}).
    Since $\tau\textprime$ is a minimal interpretation satisfying $\program{F}^{\tau}$, $\forall x \in \Var{F}$, we can say $\{\pos{x}, \ngt{x}\} \not \subset \tau\textprime$; because only one atom from $\{\pos{x}, \ngt{x}\}$ can be justified from the clause in line~\ref{line:eachvariable} of Encoding~\ref{code:programtoreduct}.
    No further clause exists in Encoding~\ref{code:programtoreduct} to justify both $\pos{x}$ and $\ngt{x}$. 
    From the interpretation $\tau\textprime \models \program{F}^{\tau}$, we can derive an assignment over variables that satisfies each clause of $U$;
    otherwise, the atom $\unsat$ is justified in $\tau\textprime$ due to clause in line~\ref{line:reach1} of Encoding~\ref{code:programtoreduct} and it follows that $\tau = \tau\textprime$ (it contradicts our assumption). 
    The existence of a satisfying assignment of the clause set $U$ contradicts that $U$ is an unsatisfiable core.
\end{proof}
\begin{figure*}
  \begin{lstlisting}[caption={Reduct of $\program{F}$ with respect to any interpretation $\tau \models \program{F}$. The lines with \% are comments.},label={code:programtoreduct},captionpos=b,mathescape=true,escapechar=|,xleftmargin=0.5cm]
    $\pos{x_i} \vee \ngt{x_i}$|\label{line:eachvariable}|
    $\neg \cls{i} \vee \neg \ngt{x_1} \vee \ldots \vee \neg \ngt{x_k} \vee \neg \pos{x_{k+1}} \vee \ldots \vee \neg \pos{x_{k+m}} \vee \unsat$|\label{line:reach1}|
    $\neg \unsat \vee \pos{x_i}$|\label{line:posinreduct}|
    $\neg \unsat \vee \ngt{x_i}$|\label{line:neginreduct}|
    $\cls{i}$|\label{line:unitclause}|
  \end{lstlisting}
  \renewcommand{\lstlistingname}{Algorithm}
\end{figure*}

\subsection{Optimizing MUS Computation with Heuristics}
\label{subsection:heuristic}
Following the encoding $\program{F}$, we introduce additional constraints to accelerate MUS computation. 
These optimization techniques are based on observations derived from MUS counting~\cite{BM2021}.
We incorporate these observations into the ASP encoding of $\program{F}$ to enhance the efficiency of the~\toolname~framework. 

\paragraph{\heuristic{1}~Union of MUSes.}
If we compute a union of MUSes or an over-approximation of them, 
then we can focus only on those clauses for MUS computation~\cite{LL2024}. 
Computing the union of MUSes is computationally harder~\cite{MKIM2019}.
Building on the work in~\cite{BM2021}, we exploit a {\em lean kernel} based approach, by invoking a MaxSAT solver~\cite{KK2009}, to compute the over-approximation of all MUSes~\cite{MIMML2014}.   

\paragraph{\heuristic{2}~Minimum and Maximum Cardinalities of MUSes.}
If we compute the minimum (or a lower bound) and maximum (or an upper bound) cardinality of MUSes, then we can encode the cardinality information as an ASP cardinality rule~\cite{CFGIKK2020} (Line~\ref{line:allclauses} of Encoding~\ref{code:astouc}) to reduce the search space of $\program{F}$. 
Since finding the minimum cardinality MUS is intractable~\cite{IPLM2015},
we adopt the technique from~\cite{BM2021} to compute a lower bound on the minimum MUS cardinality using the well-known minimal hitting set duality.
Additionally, we compute an upper bound on the maximum MUS cardinality by solving a {\em maximum satisfiability} problem.

\paragraph{\heuristic{3}~Component Decompositions.}
Bendik and Meel~\cite{BM2021} partition the clauses into {\em disjoint components}, exploiting the fact that all clauses in a MUS originate from the same component. 
The computation of disjoint components involves constructing an undirected graph 
as follows: each clause $C_i$ of the given formula corresponds to a node $v_i$ and an edge exists between two nodes $v_j$ and $v_k$, if there is a literal $\ell$ such that $\ell \in C_j$ and $\neg{\ell} \in C_k$.
To reduce the search space based on the concept of disjoint components, \toolname~introduces constraints of the form: $\bot \leftarrow \cls{i}, \cls{j}$, where $C_i$ and $C_j$ are two clauses from different components in the graph constructed above. 
The constraint ensures that both atoms $\cls{i}$ and $\cls{j}$ cannot appear in the same interpretation. 

\paragraph{\heuristic{4}~Hitting Set Duality.}
Each MUS of $F$ must intersect every known MCS. 
This observation is also utilized by~\cite{BM2021}.
Following a similar approach, we compute MCSes of $F$ given a specific time limit. 
For each MCS represented by the clause set $\{C_{i_1}, \ldots, C_{i_k}\}$, we add the constraint: $\bot \leftarrow \clarknot \cls{i_1}, \ldots, \clarknot \cls{i_k}$, ensuring that at least one clause from each known MCS appears in every answer set.

\paragraph{\heuristic{5}~Negative Literal Cover.}
If a clause $C_i$ is in a MUS $F\textprime$ such that $\exists \ell \in C_i$, then there exists another clause $C_j \in F\textprime$ such that $\neg{\ell} \in C_j$.
We incorporate this observation into \toolname~as follows: let $C_i$ be a clause in $F$ such that $\ell \in C_i$ and let $C_{j_1}, \ldots, C_{j_k}$ be the clauses containing the literal $\neg{\ell}$,
we introduce a cardinality rule as: $\{\cls{j_1}; \ldots \cls{j_k}\} \geq 1 \leftarrow \cls{i}$, ensuring that at least one of the clauses containing $\neg{\ell}$ is selected when $C_i$ is selected. 

These heuristics constrain the search space of $\program{F}$ to exclude some unsatisfiable cores of $F$ that are not MUSes of $F$. 
Since these heuristics introduce some additional constraints to $\program{F}$ that prune non-MUSes without excluding any MUS, thereby preserving the completeness of MUS enumeration~\cite{KESHFM2022}. 
Together, the encoding and heuristics enable \toolname~to efficiently enumerate MUSes by pruning the search space while ensuring completeness.
\section{Experimental Evaluation}
\label{section:experiment}
We evaluated \toolname~through extensive experiments on available MUS benchmarks.
Our performance evaluation focused on addressing the following research questions:
\begin{enumerate}[label=\textbf{RQ\arabic*},itemindent=*]
  \item \label{q:q1} How does \toolname~compare to existing MUS enumerators?
  \item \label{q:q2} Can \toolname~outperform dedicated MUS counters in exact MUS counting?
  \item \label{q:q3} How does a hybrid integration of \toolname~improve efficiency across benchmarks?
\end{enumerate}

\paragraph{Design Choice}
We prototyped \toolname~with Clingo~\cite{GKS2012}, as the underlying ASP solver\footnote{\label{sharedfootnote}The source code, dataset, and binaries are available at: \url{https://zenodo.org/records/15300472}}.
In experiment, we marked the $\allclause$ atoms (see~\Cref{subsection:aspencoding}) for minimization and 
invoked Clingo~with the command \texttt{--enum-mode=domRec --heuristic=domain} to ensure subset minimality over $\allclause$ atoms. 
Note that we have also experimented with Wasp~\cite{ADLR2015}, which does not outperform Clingo on our benchmark set.

\subsection{Benchmark, Baseline and Environmental Setup}
To analyze~\ref{q:q1}, we collected our benchmarks from three sources:~(i)~the MUS track of the SAT $2011$ competition\footnote{\url{http://www.cril.univ-artois.fr/SAT11/}}~(ii)~random graph coloring instances~\cite{LL2024}, and 
(iii)~randomly generated formulas following the procedure outlined in~\cite{SLBLdD2018}, with settings consistent with~\cite{LL2024}. 
We evaluated \toolname~against three off-the-shelf MUS enumeration tools: 
(i)~\marco~\cite{LPMM2016}~(ii)~\unimus~\cite{BC2020}~and~(iii)~\remus~\cite{BCB2018}.

To analyze~\ref{q:q2}, we considered the MUS counting benchmarks\textsuperscript{\ref{sharedfootnote}}~\cite{BM2020,BM2021}.
Clingo counts MUSes by enumeration.
We baselined \toolname~against the exact MUS counter \countmust~\cite{BM2021}.
We invoked \countmust~with the default parameter settings, which performed best in prior MUS counting evaluations~\cite{BM2021}.
We excluded \amusic~from our comparison because it provides only approximate counts and was significantly outperformed by other MUS counters in previous studies.
A direct comparison with existing ASP counters~\cite{KCM2024,KCM2025} or minimal model counters~\cite{Kabir2024} is not feasible, as MUSes correspond to subset-minimal answer sets with respect to a specific set of atoms.

We conducted our experiments on machines with AMD EPYC $7713$ processors. 
Each benchmark instance was run on $1$ core, $16$GB memory, and a timeout of $3600$ seconds.

\paragraph{Evaluation Metric.}
For~\ref{q:q1}, we evaluated the performance of these MUS enumerators based on the number of MUSes enumerated and their {\em average ranks}.
We assign each solver a {\em rank} based on the number of MUSes it enumerates per instance (the lower ranks are better), as in~\cite{BC2020}. 
For~\ref{q:q2}, we evaluated the MUS counters based on the number of instances solved and their PAR$2$ scores~\cite{SAT2017}.

\subsection{Analysis of \ref{q:q1}.}
To address~\ref{q:q1}, we evaluated \toolname~with other MUS enumerators using their average ranks. 
The statistics are presented in~\Cref{table:musenumerationresult}, 
which indicate that \toolname~outperforms the existing MUS enumerators in terms of average ranks.

We compared the number of MUSes enumerated by \toolname~against other baseline tools (the comparison is visually presented in the extended version\footnote{The extended version of this paper is available at: \url{https://arxiv.org/abs/2507.03929}.}).
Our experimental analysis demonstrates that \toolname~enumerates more MUSes than \marco, \remus, and \unimus~in $555$, $494$, and $508$ instances, respectively. 
Moreover, \toolname~enumerates at least twice as many MUSes compared to \marco, \remus, and \unimus~in $550$, $394$, and $446$ instances, respectively.
In contrast, there are $86, 154, 141$ instances where \marco, \remus, and \unimus~enumerates more MUSes than \toolname, and the limitation is addressed in our hybrid approach discussed in~\ref{q:q3}. 

\begin{table}[h]
  \centering
  \begin{tabular}{m{3em} m{4em} m{4em} m{4em} m{4em}} 
  \toprule
   & {\footnotesize \marco} & {\footnotesize \remus} & {\footnotesize \unimus} & {\footnotesize \toolname}\\
  \midrule
  Rank & 2.882 & 2.917 & 2.483 & \textbf{1.576}\\
  \bottomrule
\end{tabular}
\caption{Average ranks of \toolname~and other MUS enumerators; {\em the lower ranks are better}.}
\label{table:musenumerationresult}
\end{table}

\subsection{Analysis of \ref{q:q2}.}
To address~\ref{q:q2}, we compared the number of instances solved by \toolname~and \countmust~and their corresponding PAR$2$ scores. 
The statistics are shown in~\Cref{table:muscountingresult}.
We consider an instance solved by \toolname~if Clingo is able to enumerate all MUSes within the timeout.
Their runtime performance is visually shown in~\Cref{fig:counterruntime}. 
Our experimental analysis reveals that \toolname~solved all instances that were solved by \countmust~and $38$ more instances than \countmust.
\begin{table}[h]
  \centering
  \begin{tabular}{m{5em} m{4em} m{5em} m{5em}} 
  \toprule
   & {\footnotesize \unimus} & {\footnotesize \countmust} & {\footnotesize \toolname} \\
  \midrule
  \#Solved & 545 & 887 & \textbf{925}\\
  \midrule  
  PAR$2$ & 3630 & 1410 & \textbf{1150} \\
  \bottomrule
\end{tabular}
\caption{The performance comparison of \toolname~and existing MUS counters on MUS counting.}
\label{table:muscountingresult}
\end{table}
\begin{figure}
  \centering
  \scalebox{0.8}{
    \includegraphics[width=0.75\linewidth]{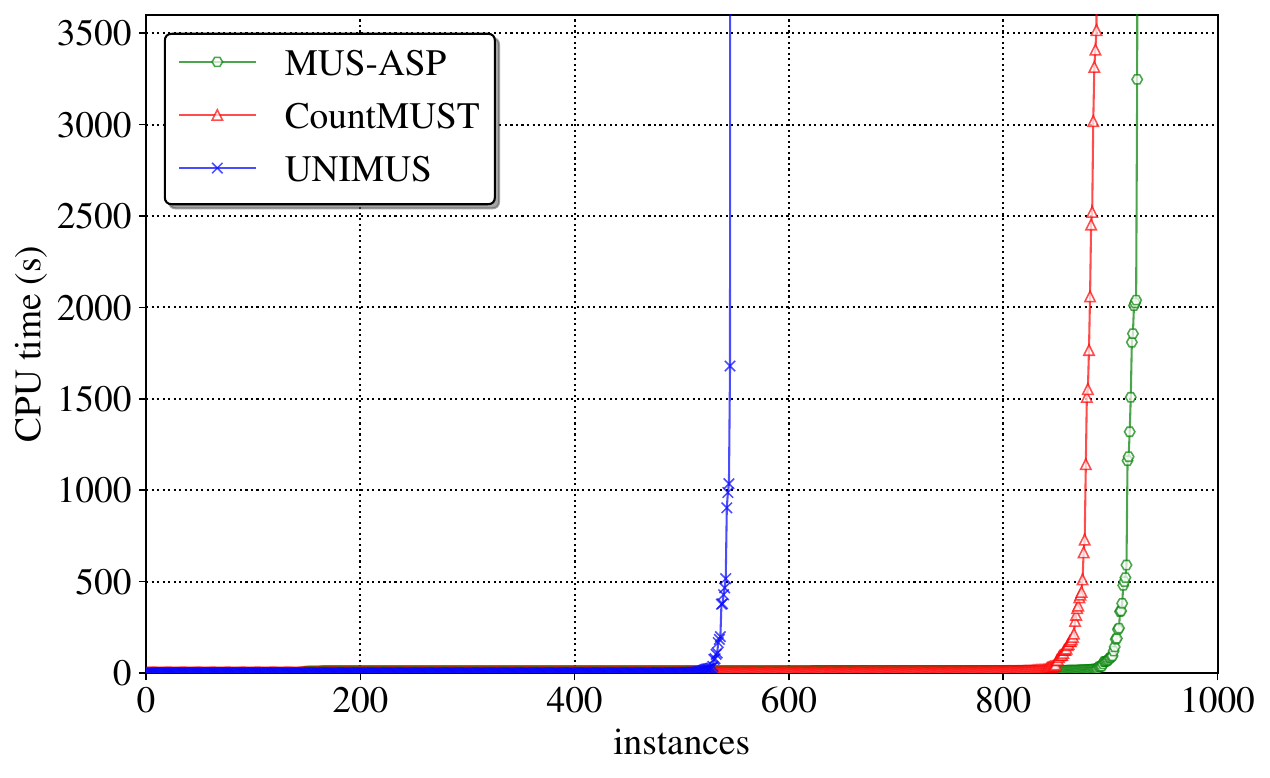}
  }
    \caption{Runtime performance of \toolname~and existing MUS counters on MUS counting. A point $(x,y)$ on the plot denotes that a tool counted $x$ instances within $y$ seconds.}
    \label{fig:counterruntime}
\end{figure}
\subsection{Analysis of \ref{q:q3}.}
We observe that the performance of \toolname~deteriorates with increasing formula size.
Specifically, there are $85, 92, 92$ instances where \toolname~failed to enumerate any MUS, while \marco, \remus, \unimus, respectively, succeeded in enumerating at least one MUS.
To address this limitation, we developed a hybrid enumerator, denoted as X+\toolname, for a given MUS enumerator X. 
The hybrid enumerator X+\toolname~invokes our prototype \toolname~for instances with fewer than $5000$ clauses (a threshold chosen empirically) and defaults to the enumerator X for larger instances.
We identified only two instances where X+\toolname~failed to enumerate any MUS, while the MUS enumerator X was able to enumerate at least one MUS.
This indicates that the hybrid integration almost universally inherits the better behavior of both \toolname~and the baseline X.
The performance of the hybrid enumerator is reported in~\Cref{table:musenumerationresultwithhybrid}.
The ranks demonstrate that integrating \toolname~with existing MUS enumerators substantially enhances overall performance.

\begin{table}[h]
    \centering
    \begin{tabular}{m{3em} m{9em} m{9em} m{9em} m{4em}} 
    \toprule
     & {\footnotesize \marco +\toolname} & {\footnotesize \remus +\toolname} & {\footnotesize \unimus +\toolname} & {\footnotesize \toolname}\\
    \midrule
    Rank & 1.231 & 1.110 & \textbf{1.053} & 1.405\\ 
    \bottomrule
    \end{tabular}
    \caption{Average ranks of hybrid MUS enumerators X+\toolname. 
    The hybrid approach uses \toolname~for instances having $<5000$ clauses; otherwise, it defaults to the MUS enumerator X.
    }
    \label{table:musenumerationresultwithhybrid}
\end{table}
\subsection{Further Experimental Analysis}
We evaluate the effectiveness of different heuristics incorporated within the \toolname~framework. 
For MUS enumeration, the results are more nuanced. We found that all five heuristics perform optimally, whereas excluding \heuristic{5} leads to significantly worse performance.
In contrast, MUS counting yields clearer results, which are presented in~\Cref{table:muscountingresultwithheuristic}.
In the table, the leftmost column (no heuristic) reports the baseline performance of \toolname~without any heuristic applied --- measured by the number of solved instances and PAR2 scores. 
The notation \heuristic{i..j} ($i < j$) indicates that the heuristics \heuristic{i},$\ldots$\heuristic{j} are enabled in~\toolname. 
The~\Cref{table:muscountingresultwithheuristic} confirms that each heuristic, even when applied individually, contributes to enhanced performance in the MUS counting task.

\begin{table}[h]
  \centering
  \begin{tabular}{m{5em} m{6em} m{4em} m{4em} m{4em}} 
  \toprule
   & no heuristic & \heuristic{1..2} & \heuristic{1..3} & \heuristic{1..5}\\
  \midrule
  \#Solved & 303 & 305 & 655 & \textbf{927}\\
  \midrule
  PAR$2$ & 5273 & 5261 & 2932 & \textbf{1135}\\
  \bottomrule
  \end{tabular}
  \caption{Performance of \toolname~on MUS counting benchmarks with different heuristic combinations.}
  \label{table:muscountingresultwithheuristic}
\end{table}
\noindent {\em Further experimental analysis is provided in the extended version.}

\section{Conclusion}
\label{section:conclusion}

This paper presented \toolname, an ASP-based framework for MUS enumeration that encodes the problem as the computation of subset-minimal answer sets of a disjunctive logic program. 
Experimental results demonstrate that \toolname~outperforms state-of-the-art MUS enumeration and counting tools across a diverse set of benchmarks, with additional gains from heuristic optimizations and hybrid integration. 
As a direction for future work, we aim to develop approximate MUS counting techniques by leveraging ASP solvers that support XOR constraints \cite{KESHFM2022}.
\nocite{*}
\bibliographystyle{eptcs}
\bibliography{example}
\end{document}